\documentclass[twoside,11pt]{article} % For pdflatex
\pdfoutput=1

\usepackage[preprint]{jmlr2e}
\usepackage{amsmath}
\usepackage{mathtools} 
\usepackage{algorithm,algorithmic}
\usepackage{comment}
\usepackage{caption}
\usepackage[section]{placeins}

\newcommand{\qed}{\hfill\BlackBox\\[2mm]}

\ShortHeadings{Reconsidering Dependency Networks}{Takabatake and Akaho}
\firstpageno{1}

\begin{document}

\title{Reconsidering Dependency Networks from an Information Geometry Perspective}

\author{\name Kazuya Takabatake \email takabatake@roy.hi-ho.ne.jp\\
	\addr Independent Researcher\\
	Izumisano, Osaka, Japan\\
	\AND
	\name Shotaro Akaho \email akaho@ism.ac.jp\\
	\addr The Institute of Statistical Mathematics\\
	Tachikawa, Tokyo, 190-8565, Japan}
\editor{}

\maketitle

\begin{abstract}
	Dependency networks \citep{heckerman2000dependency} provide a flexible
	framework for modeling complex systems with many variables by combining
	independently learned local conditional distributions through
	pseudo-Gibbs sampling.
	Despite their computational advantages over Bayesian and Markov
	networks, the theoretical foundations of dependency networks remain
	incomplete, primarily because their model distributions---defined as
	stationary distributions of pseudo-Gibbs sampling---lack closed-form
	expressions.
	This paper develops an information-geometric analysis of pseudo-Gibbs
	sampling, interpreting each sampling step as an m-projection onto a
	full conditional manifold.
	Building on this interpretation, we introduce the full conditional
	divergence and derive an upper bound that characterizes the location of
	the stationary distribution in the space of probability distributions.
	We then reformulate both structure and parameter learning as
	optimization problems that decompose into independent subproblems for
	each node, and prove that the learned model distribution converges to
	the true underlying distribution as the number of training samples
	grows to infinity.
	Experiments confirm that the proposed upper bound is tight in practice.
\end{abstract}

\begin{keywords}
	Graphical models, Dependency networks, Pseudo-Gibbs sampling,
	Information geometry, m-projection, Full conditional divergence,
	Pseudo-likelihood
\end{keywords}

\section{Introduction}
The primary purpose of this paper is to investigate dependency networks
\citep{heckerman2000dependency} from an information-geometric perspective
and to provide new theoretical insights into their behavior.

For a large collection of random variables $X=\{X_i\}_{i=0}^{n-1}$,
learning the joint distribution $p^*(X)$ from training data requires
substantial computational resources, where $p^*$ denotes the true
underlying distribution.
Bayesian networks and Markov networks \citep{koller2009probabilistic}
are well-established tools for modeling such joint distributions;
however, determining their graphical structure becomes computationally
prohibitive as $n$ grows large.

In contrast, learning a single \emph{full conditional distribution}
\citep{gilks1995markov} $p^*(X_i\mid X_{-i})$, where $X_{-i}$ denotes
all variables except $X_i$, is far more tractable because it reduces to
a univariate regression problem for which efficient algorithms are
available \citep{breiman2017classification}.
If a joint distribution can be reconstructed from a collection of
estimated full conditional distributions
$\{\hat{p}_i(X_i\mid X_{-i})\}_{i=0}^{n-1}$, one obtains a learning
framework that models the joint distribution at reduced computational
cost.
This idea dates back to \citet{besag1974spatial,besag1975statistical},
who studied Markov random fields and introduced the notion of
pseudo-likelihood.
\citet{heckerman2000dependency} later coined the term ``dependency
network'' and formalized the framework with a directed graph structure.

A dependency network is a graphical model built on this strategy: each
node employs a regression model to estimate its own full conditional
distribution.
After training, a dependency network generates samples via a Markov
chain Monte Carlo (MCMC)~\citep{gilks1995markov} procedure called
\emph{pseudo-Gibbs sampling} \citep{heckerman2000dependency}.

The model distribution of a Bayesian network is expressed as
$\pi(X)=\prod_i\pi(X_i\mid Y_i)$, and that of a Markov network as
$\pi(X)=Z^{-1}\prod_c\phi_c(X_c)$
\citep{koller2009probabilistic}.
Because these models admit closed-form expressions for~$\pi$, maximum
likelihood estimation is directly applicable.
In contrast, the model distribution of a dependency network is the
stationary distribution of a Markov chain and has no closed-form
expression; consequently, maximum likelihood estimation is not
applicable.
This absence of a closed-form expression has been the main obstacle to
the theoretical analysis of dependency networks.

\citet{heckerman2000dependency} analyzed pseudo-Gibbs sampling as a
perturbation of standard Gibbs sampling
\citep{geman1984stochastic}.
However, their perturbation argument is insufficient to characterize
general dependency networks, since even a small change in a transition
matrix can substantially alter its stationary distribution.
\citet{Takabatake12,takabatake2021reconsidering} interpreted pseudo-Gibbs sampling as iterative m-projections \citep{Amari95} onto \emph{full conditional manifolds},
but these works remained unpublished as refereed papers.
\citet{kuo2019pseudo} independently established the same geometric
interpretation.

This paper builds on
\citet{Takabatake12,takabatake2021reconsidering} and makes the
following contributions:
\begin{enumerate}
	\item We generalize the concept of \emph{information source}
	(Section~\ref{sec:node}) and establish its connection to
	Shannon entropy minimization.
	\item We introduce the \emph{full conditional divergence}, a new
	pseudo-distance closely related to Besag's pseudo-likelihood,
	and derive an upper bound (the FC-limit) that characterizes
	the location of the stationary distribution in the space of
	probability distributions (Section~\ref{sec:information_geometry_of_pseudo-gibbs_sampling}).
	\item We reformulate structure and parameter learning as
	optimization problems that decompose into independent
	subproblems for each node, and prove that the learned model
	distribution converges to the true underlying distribution as
	the training sample size grows to infinity
	(Section~\ref{sec:learning}).
	\item We provide a theoretical analysis of inference by conditional
	pseudo-Gibbs sampling
	(Section~\ref{sec:inference_by_pseudo-gibbs_sampling}).
\end{enumerate}
Compared with Bayesian and Markov networks, dependency networks offer
significant computational advantages because the learning task
decomposes into independent subproblems, one for each node.
The information-geometric framework developed here provides the
theoretical foundation needed to exploit this advantage rigorously.

The remainder of this paper is organized as follows.
Section~\ref{sec:dependency_network} reviews dependency networks.
Section~\ref{sec:information_geometry_of_pseudo-gibbs_sampling}
develops an information-geometric analysis of pseudo-Gibbs sampling.
Section~\ref{sec:learning} presents structure and parameter learning
algorithms with convergence guarantees.
Section~\ref{sec:inference_by_pseudo-gibbs_sampling} discusses
probabilistic inference.
Section~\ref{sec:related_work} reviews related work, and
Section~\ref{sec:conclusion} concludes the paper.
Proofs of all theorems are collected in the Appendix.

\section{Dependency Network}\label{sec:dependency_network}
We begin by establishing the notation used throughout this paper.
Random variables are denoted by capital letters (e.g., $X$), and their
realized values by the corresponding lowercase letters (e.g., $x$).
The set of values that $X$ can take is also denoted by $X$; the
intended meaning will be clear from context.
Throughout this paper, we consider only finite discrete
variables.\footnote{Some of the results may extend to continuous
	variables, but we restrict attention to the finite discrete case to
	keep the mathematical treatment self-contained.}
For any set $A$, $|A|$ denotes its cardinality; in particular, $|X|$
denotes the number of possible values of $X$.
The number of variables in the network is denoted by $n$.
For a subset $J\subseteq[0,n)$, $X_J$ denotes the collection
$\{X_i\}_{i\in J}$, and $X_{-i}$ denotes all variables except $X_i$.
For brevity, a tuple of variables $(X,Y,Z)$ is written as $XYZ$.
We write $N$ for the number of training samples and $N^o$ for the
number of samples in the pseudo-Gibbs sampling output.
The expectation of a function $f(X)$ under a distribution $p(X)$ is
denoted by
\[\langle f(X) \rangle_{p(X)}=\sum_{x \in X} p(x) f(x).\]
When $X$ is clear from context, we write simply
$\langle f(X)\rangle_p$.
The set of all probability distributions over $X$ is denoted by
$\mathcal P(X)$, or simply $\mathcal P$ when $X$ is understood.
We write $p^*$ for the true underlying distribution, $p^D$ for the
empirical distribution of the training data, and $\pi$ for the model
distribution of a dependency network.
The entropy of $p(X)$ is defined as
$H(p(X))=-\langle \log p(X) \rangle_p$, and the conditional entropy of
$p(X\mid Y)$ as
$H(p(X\mid Y)) =-\langle\log p(X\mid Y) \rangle_{p(XY)}$.
The word ``ergodic'' means irreducible and aperiodic.
Throughout this paper, $\log$ denotes the natural logarithm.
In all figures, solid lines represent m-flat manifolds and dotted lines
represent e-flat manifolds.

\subsection{Node}\label{sec:node}
\begin{figure}[t]
	\centering
	\includegraphics{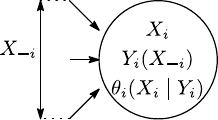}
	\caption{Node $i$.}\label{fig:node}
\end{figure}

A dependency network consists of $n$ nodes.
Figure~\ref{fig:node} illustrates the $i$-th node, which comprises
three elements: a variable $X_i$, a deterministic function
$Y_i(X_{-i})$ called the \emph{information source}, and a conditional
distribution table $\theta_i(X_i\mid Y_i)$.

The information source $Y_i$ summarizes the information in $X_{-i}$
that is relevant to the distribution of $X_i$.
In many practical cases $|Y_i|\ll|X_{-i}|$, meaning that $Y_i$
depends on only a small subset of the variables in $X_{-i}$.
Although $Y_i$ is defined as a function of $X_{-i}$, it can also be
viewed as a function of $X$ that ignores $X_i$; we write
$Y_i(X_{-i})$ or $Y_i(X)$ depending on the context.

The conditional distribution table $\theta_i$ specifies a distribution
over $X_i$ given $Y_i$, and is used in \emph{pseudo-Gibbs sampling}
(Section~\ref{sec:pseudo-gibbs_sampling}).
Since $Y_i$ is itself a function of $X_{-i}$, the notations
$\theta_i(X_i\mid Y_i)$ and $\theta_i(X_i\mid X_{-i})$ are used
interchangeably.

\subsection{Pseudo-Gibbs Sampling}\label{sec:pseudo-gibbs_sampling}
Pseudo-Gibbs sampling \citep{heckerman2000dependency} is an MCMC method
that constructs a joint distribution from a set of conditional
distribution tables.
Algorithm~\ref{alg:pseudo} presents the procedure.

\begin{algorithm}[t]
	\caption{Pseudo-Gibbs sampling}\label{alg:pseudo}
	$X=\{X_i\}_{i=0}^{n-1}$: Variables in the network\\
	$\{X^t\}_{t=0}^{N^o-1}$: MCMC output
	\begin{algorithmic}[1]
		\STATE $X\gets x^0$ (arbitrary initial value)
		\FOR{$t=0$ \TO $N^o-1$}
		\STATE $X^t\gets X$
		\STATE Select a node $i$ \label{line:select_a_node}
		\STATE $y_i \gets Y_i(X)$
		\STATE Draw $x_i\sim\theta_i(X_i\mid y_i)$
		\STATE $X_i\gets x_i$
		\ENDFOR
		\RETURN $\{X^t\}_{t=0}^{N^o-1}$
	\end{algorithmic}
\end{algorithm}

Throughout this paper, we assume that the Markov chain defined by pseudo-Gibbs sampling is ergodic; this guarantees the existence of a unique stationary distribution $\pi$ such that
$\lim_{t\to\infty}p^t(X)=\pi(X)$, where $p^t$ denotes the chain's distribution at time $t$
\citep{gilks1995markov,seneta2006non}.

To distinguish it from pseudo-Gibbs sampling, we refer to the original
Gibbs sampling \citep{geman1984stochastic} as \emph{genuine Gibbs
	sampling}.
Genuine Gibbs sampling is the special case of pseudo-Gibbs sampling in
which there exists a distribution $\pi(X)$ satisfying
\[
\forall i\in[0,n),\quad
\pi(X_i\mid X_{-i}) \equiv \theta_i(X_i\mid X_{-i}).
\]
Under ergodicity, the distribution of $X^t$ converges to $\pi(X)$ as
$t \to \infty$ in genuine Gibbs sampling \citep{gilks1995markov}.

There are two common strategies for selecting a node at
line~\ref{line:select_a_node}.
The first, called \emph{sequential scan} \citep{he2016scan}, cycles
through nodes in a fixed order $0,1,\dots,n-1,0,1,\dots$.
The second, called \emph{random scan} \citep{he2016scan}, selects node
$i$ with probability $c_i$, where
\begin{equation}\label{eq:c_i}
	\sum_{i=0}^{n-1}c_i=1.
\end{equation}
Unless stated otherwise, we set $c_i=1/n$.

Assuming ergodicity of pseudo-Gibbs sampling with random scan, the
following properties hold \citep{gilks1995markov}:

\begin{description}
	\item[Existence of stationary distribution:]
	There exists a unique \emph{stationary distribution}
	$\pi(X)$ such that for any initial value $x^0$,
	\[
	\pi(X) = \lim_{t \to \infty} p(X^t\mid x^0).
	\]
	\item[Monte Carlo integration:]
	For any function $f(X)$ with
	$\langle|f(X)|\rangle_\pi<\infty$,
	\begin{equation}\label{eq:ergodicity}
		\lim_{N^o \to \infty}
		\frac1{N^o}\sum_{t=0}^{N^o-1}f(X^t)
		= \langle f(X) \rangle_\pi
		\quad \text{(a.s.)}.
	\end{equation}
\end{description}

Substituting the indicator function $f(X) =\mathbf{1}(X=x)$ into
Eq.~\eqref{eq:ergodicity} gives
\[
\lim_{N^o \to \infty} \frac{N^o_x}{N^o} = \pi(x)
\quad \text{(a.s.)},
\]
where $N^o_x=|\{t\mid X^t =x,\, t\in[0,N^o)\}|$ denotes the
number of occurrences of $x$ in the MCMC output.
This shows that $\pi(x)$ can be estimated via Monte Carlo methods even
when it cannot be determined analytically.

Pseudo-Gibbs sampling with sequential scan defines an
\emph{inhomogeneous} Markov chain \citep{seneta2006non} and therefore
does not possess a stationary distribution in general.
However, the subsequence $X^i, X^{i+n}, X^{i+2n}, \dots$ forms a
\emph{homogeneous} Markov chain with stationary distribution
$\pi^i(X)$ under the ergodicity assumption.
Consequently, the limit in Eq.~\eqref{eq:ergodicity} still exists, and

\begin{equation}\label{eq:ordered}
	\lim_{N^o\to\infty}\frac{N^o_x}{N^o}
	= \frac{\pi^0(x) + \dots + \pi^{n-1}(x)}{n}.
\end{equation}

Based on these observations, we state the following working hypothesis,
which is central to dependency network learning and will be justified
in Section~\ref{sec:information_geometry_of_pseudo-gibbs_sampling}.

\begin{quote}
	\textbf{Hypothesis.}\quad
	If each conditional distribution table
	$\theta_i(X_i\mid X_{-i})$ provides a good estimate of
	$p^*(X_i\mid X_{-i})$, then the stationary distribution of
	pseudo-Gibbs sampling provides a good estimate of $p^*(X)$.
\end{quote}

\section{Information Geometry of Pseudo-Gibbs Sampling}
\label{sec:information_geometry_of_pseudo-gibbs_sampling}
In this section, we examine pseudo-Gibbs sampling from the perspective
of information geometry.
This analysis leads to an interpretation of each sampling step as an
m-projection \citep{Amari95} onto a certain manifold, and motivates the
problem of making the stationary distribution close to a desired target
distribution.

We begin with definitions and standard results from information
geometry.
In information geometry, a distribution is represented as a point in
the space of distributions $\mathcal{P}$. For two distinct distributions $p_0, p_1 \in \mathcal{P}$, the
one-dimensional manifold
\[
\{p_\lambda \in \mathcal{P} \mid p_\lambda(X) = (1-\lambda)p_0(X)
+ \lambda p_1(X),\ \lambda \in \mathbb{R}\}
\]
is called the \emph{m-geodesic} $p_0p_1$, and the
one-dimensional manifold
\[
\{p_\lambda \in \mathcal{P} \mid \log p_\lambda(X) = (1-\lambda)
\log p_0(X) + \lambda \log p_1(X) - \log Z\}
\]
is called the \emph{e-geodesic} $p_0p_1$, where $Z$
is a normalizing constant
\citep{Amari00,amari2016information}.

A manifold $\mathcal{M}$ is said to be \emph{m-flat} if the m-geodesic
$p_0p_1$ is contained in $\mathcal{M}$ for every pair of distinct
distributions $p_0, p_1 \in \mathcal{M}$.
Similarly, $\mathcal{M}$ is said to be \emph{e-flat} if the e-geodesic
$p_0p_1$ is contained in $\mathcal{M}$ for every such pair.

For distributions $p$ and $q$, the Kullback--Leibler (KL) divergence is
defined by
\[
KL(p\|q) = \left\langle \log\frac{p(X)}{q(X)}
\right\rangle_p.
\]

An m-geodesic $pq$ and an e-geodesic $qr$ are said to be
\emph{orthogonal} if and only if
\[
\sum_{x \in X} (p(x) - q(x))(\log q(x) - \log r(x)) = 0.
\]
When this condition holds, the \emph{Pythagorean theorem}
\begin{align*}
	KL(p\|r)
	& = \left\langle\log\frac{p}{q}\right\rangle_p
	+ \left\langle\log\frac{q}{r}\right\rangle_p          \\
	& = \left\langle\log\frac{p}{q}\right\rangle_p
	+ \left\langle\log\frac{q}{r}\right\rangle_q
	= KL(p\|q)+KL(q\|r)
\end{align*}
follows, where the second equality uses the orthogonality condition.

We define the \emph{m-projection} onto a manifold $\mathcal{M}$ as the
operator $\Pi_m(\mathcal{M})$ that maps $p$ to
\[
p\,\Pi_m(\mathcal{M})=\arg\min_{q\in\mathcal{M}}
KL(p\|q).
\]
As illustrated in Figure~\ref{fig:m-projection}, when $\mathcal{M}$ is
e-flat, $p\Pi_m(\mathcal{M})$ is uniquely determined by the Pythagorean
theorem.

\begin{figure}[t]
	\centering
	\includegraphics{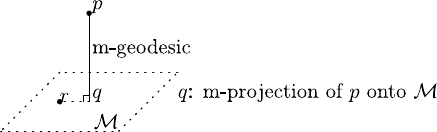}
	\caption{m-projection onto an e-flat manifold.}
	\label{fig:m-projection}
\end{figure}

With these tools in place, we turn to the information geometry of
Gibbs sampling.
We define the \emph{full conditional manifold} of node $i$ as
\begin{equation}\label{eq:E}
	E(\theta_i)=\{p \in \mathcal{P} \mid
	p(X_i \mid X_{-i})=\theta_i(X_i \mid X_{-i})\}.
\end{equation}
Any distribution $p \in \mathcal{P}$ can be written as
$p(X_{-i})\,p(X_i \mid X_{-i})$, and the manifold $E(\theta_i)$
constrains the full conditional component of this factorization.

\begin{theorem}\label{thm:e-flat}
	Every full conditional manifold is both e-flat and m-flat.
\end{theorem}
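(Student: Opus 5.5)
The plan is to verify the two flatness properties directly from the definitions, taking $p_0,p_1\in E(\theta_i)$ distinct and checking that every point of the m-geodesic and of the e-geodesic $p_0p_1$ that lies in $\mathcal P$ again has full conditional $\theta_i(X_i\mid X_{-i})$. The key reformulation is that $p\in E(\theta_i)$ if and only if $p(x)=p(x_{-i})\,\theta_i(x_i\mid x_{-i})$ for all $x$, where $p(x_{-i})=\sum_{x_i}p(x_i,x_{-i})$. This is the form to work with throughout. I will also use the convention that the conditional at $x_{-i}$ with $p(x_{-i})=0$ is unconstrained, or else assume strictly positive $\theta_i$, since the e-geodesic is only defined on positive distributions.

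\emph{m-flatness.} Let $p_\lambda=(1-\lambda)p_0+\lambda p_1$. Using the factorization I get
\[
p_\lambda(x)=\bigl[(1-\lambda)p_0(x_{-i})+\lambda p_1(x_{-i})\bigr]\theta_i(x_i\mid x_{-i}).
\]
Summing over $x_i$ and using $\sum_{x_i}\theta_i=1$, the bracket equals $p_\lambda(x_{-i})$. Hence $p_\lambda(x_i\mid x_{-i})=\theta_i(x_i\mid x_{-i})$ wherever $p_\lambda(x_{-i})>0$, which gives $p_\lambda\in E(\theta_i)$ whenever $p_\lambda\in\mathcal P$.

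\emph{e-flatness.} Here I will write $p_\lambda(x)=Z^{-1}p_0(x)^{1-\lambda}p_1(x)^{\lambda}$ and substitute the factorization:
\[
p_\lambda(x)=Z^{-1}\,p_0(x_{-i})^{1-\lambda}p_1(x_{-i})^{\lambda}\,\theta_i(x_i\mid x_{-i}).
\]
The power of $\theta_i$ collapses to $\theta_i^{(1-\lambda)+\lambda}=\theta_i$. The prefactor depends only on $x_{-i}$, so summing over $x_i$ shows it equals $p_\lambda(x_{-i})$. Dividing then yields $p_\lambda(x_i\mid x_{-i})=\theta_i(x_i\mid x_{-i})$.

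The only delicate point I expect is the handling of zeros. Both $\log p$ in the e-geodesic and the conditional $p(X_i\mid X_{-i})$ at $x_{-i}$ with zero marginal need care, and I will dispose of them by the positivity or unconstrained-conditional convention stated above. Everything else is a one-line computation. An alternative viewpoint I could mention is that $E(\theta_i)$ is cut out by the linear constraints $p(x)=\theta_i(x_i\mid x_{-i})\sum_{x_i'}p(x_i',x_{-i})$, which gives m-flatness. It is also the exponential family $\{\exp(\log\theta_i(x_i\mid x_{-i})+f(x_{-i}))\}$ with arbitrary $f$, which gives e-flatness.
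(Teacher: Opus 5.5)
Your proposal is correct and follows the paper's proof essentially verbatim: factor $p_j(X)=p_j(X_{-i})\,\theta_i(X_i\mid X_{-i})$, substitute into the m- and e-geodesic formulas, and sum over $X_i$ to identify the prefactor as the marginal $p_\lambda(X_{-i})$. Your explicit handling of zeros (positive $\theta_i$ or an unconstrained conditional where $p_\lambda(x_{-i})=0$, and restricting to $p_\lambda\in\mathcal P$) is extra care the paper omits, but it does not change the argument.
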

\begin{proof}
	See Appendix.
\end{proof}

Let $\{X^t\}_{t=0}^\infty$ be a Markov chain generated by pseudo-Gibbs
sampling, and let $p^t=p(X^t)\in\mathcal{P}$ denote the distribution at
time $t$.
Suppose that node $i$ is updated at time $t$.
Then $p^t$ transitions to
\[
p^{t+1}(X) = p^t(X_{-i})\,\theta_i(X_i \mid X_{-i}),
\]
that is, the full conditional distribution $p^t(X_i \mid X_{-i})$ is
replaced by $\theta_i(X_i \mid X_{-i})$.
The following theorem shows that this transition is an m-projection.

\begin{theorem}\label{thm:m-projection}
	\[
	p\Pi_m(E(\theta_i))=p(X_{-i})\theta_i(X_i \mid X_{-i}).
	\]
	That is, updating node $i$ moves the distribution $p$ to its
	m-projection onto $E(\theta_i)$.
\end{theorem}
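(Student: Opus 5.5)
The plan is to minimize $KL(p\|q)$ over $q\in E(\theta_i)$ directly, via the chain rule for the KL divergence. Any $q\in E(\theta_i)$ factors as $q(X)=q(X_{-i})\,\theta_i(X_i\mid X_{-i})$, and conversely every choice of marginal $q(X_{-i})\in\mathcal P(X_{-i})$ yields a member of $E(\theta_i)$ this way. So the minimization over $E(\theta_i)$ is the same as a minimization over the free marginal $q(X_{-i})$. Writing $p(X)=p(X_{-i})\,p(X_i\mid X_{-i})$, I would split the log-ratio and take the expectation under $p$:
\[
KL(p\|q)=KL\bigl(p(X_{-i})\,\|\,q(X_{-i})\bigr)+\left\langle \log\frac{p(X_i\mid X_{-i})}{\theta_i(X_i\mid X_{-i})}\right\rangle_{p(X)} .
\]
The second term does not depend on $q$. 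The first term is nonnegative and vanishes exactly when $q(X_{-i})=p(X_{-i})$, by Gibbs' inequality. Hence the unique minimizer is $q=p(X_{-i})\theta_i(X_i\mid X_{-i})$, which is the claim.

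Alternatively, and in keeping with the geometric language of the section, I would verify the orthogonality condition. Set $\hat q=p(X_{-i})\theta_i$. Take any $r\in E(\theta_i)$; since $E(\theta_i)$ is e-flat by Theorem~\ref{thm:e-flat}, the e-geodesic $\hat q r$ lies in it. Then
\[
\log\hat q-\log r=\log p(X_{-i})-\log r(X_{-i}),
\]
which depends only on $X_{-i}$. Its expectations under $p$ and under $\hat q$ agree because the two distributions share the marginal on $X_{-i}$. So $\sum_x(p-\hat q)(\log\hat q-\log r)=0$, and the Pythagorean theorem gives $KL(p\|r)=KL(p\|\hat q)+KL(\hat q\|r)\ge KL(p\|\hat q)$, with equality iff $r=\hat q$.

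The main obstacle is support. If $\theta_i$ has zeros, or $p(X_{-i})$ has zeros, the terms must be handled with the conventions $0\log 0=0$ and $KL=\infty$ when support is violated. I would note that the conditional $p(X_i\mid X_{-i})$ only matters where $p(X_{-i})>0$. If $p\ll\theta_i$ fails on that set, then every $q\in E(\theta_i)$ gives infinite KL; I would adopt the convention that the projection is still defined by the factorization, or else assume positive tables, and state this explicitly.
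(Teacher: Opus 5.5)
Your first argument is the paper's proof. You split $KL(p\|q)$ into the marginal divergence on $X_{-i}$ plus a term that does not depend on $q$, and observe that the marginal divergence vanishes exactly when $q(X_{-i})=p(X_{-i})$. The Pythagorean/orthogonality route and your support caveats are both correct, but the result does not need them.
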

\begin{proof}
	See Appendix.
\end{proof}

A notable property of this m-projection is its linearity:
\[
\bigl((1-\lambda)p_0 + \lambda p_1\bigr)\Pi_m(E(\theta_i))
= (1-\lambda)p_0\Pi_m(E(\theta_i))
+ \lambda p_1\Pi_m(E(\theta_i)),
\]
which follows directly from Theorem~\ref{thm:m-projection}.

\subsection{Pseudo-Gibbs Sampling with Sequential Scan}
Pseudo-Gibbs sampling with sequential scan defines an inhomogeneous
Markov chain whose transition matrix changes with $t$ in
Algorithm~\ref{alg:pseudo}, which complicates its theoretical analysis.
Nevertheless, it admits a natural geometric interpretation.

\begin{figure}[t]
	\centering
	\includegraphics{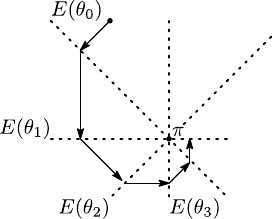}
	\caption{Movement of $p^t$ in genuine Gibbs sampling with
		sequential scan.}
	\label{fig:genuine_gs}
\end{figure}
\begin{figure}[t]
	\centering
	\includegraphics{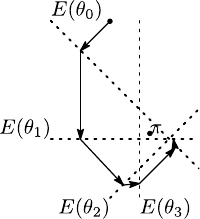}
	\caption{Movement of $p^t$ in pseudo-Gibbs sampling with
		sequential scan.}
	\label{fig:pseudo-gs}
\end{figure}

Figure~\ref{fig:genuine_gs} illustrates the movement of
$p^t\,({=}\,p(X^t))$ in genuine Gibbs sampling.
In this case, all full conditional manifolds intersect at a unique
point $\pi$.
At the first step, $X_0$ is updated: $p^0$ is m-projected onto
$E(\theta_0)$.
Next, $X_1$ is updated: $p^1$ is m-projected onto $E(\theta_1)$, and
so on.
This depiction makes the convergence $p^t \to \pi$ geometrically
intuitive.

In contrast, Figure~\ref{fig:pseudo-gs} illustrates pseudo-Gibbs
sampling, where the full conditional manifolds do not share a common
intersection.
If every manifold passes close to a certain point $\pi$, then $p^t$
remains confined to a small neighborhood of $\pi$ after sufficiently
many transitions.
More precisely, $p^t$ moves along the cyclic orbit
$\pi^0\to\dots \to \pi^{n-1} \to \pi^0$, where each $\pi^i$
is defined in Eq.~\eqref{eq:ordered}, and the centroid of
$\{\pi^i\}$ becomes the model distribution.

We note that Figures~\ref{fig:genuine_gs}
and~\ref{fig:pseudo-gs} provide geometric intuition only and do not
constitute a rigorous proof of convergence.

\subsection{Pseudo-Gibbs Sampling with Random Scan}
Pseudo-Gibbs sampling with random scan defines a homogeneous Markov
chain, making its theoretical analysis more tractable than that of
sequential scan.

\begin{figure}[t]
	\centering\includegraphics{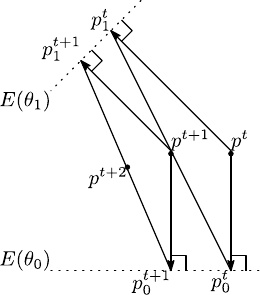}
	\caption{Movement of $p^t$ in pseudo-Gibbs sampling with random
		scan.}\label{fig:movement}
\end{figure}

Figure~\ref{fig:movement} illustrates the movement of $p^t$ for the
case $n=2$ and $c_i=1/2$.
Here $p^t$ denotes the distribution of the Markov chain at time $t$,
and $p^t_i$ denotes the m-projection of $p^t$ onto
$E(\theta_i)$.
When the identity of the updated node is not observed,
\begin{equation}\label{eq:p^(t+1)}
	p^{t+1}=\sum_i c_i p^t_i,
\end{equation}
where the $c_i$ are the selection probabilities in
Eq.~\eqref{eq:c_i}.
By the ergodicity assumption, $\lim_{t\to\infty}p^t=\pi$ holds
regardless of the initial distribution $p^0$.

Pseudo-Gibbs sampling can be viewed as a strategic game
\citep{osborne1994course} played by $n+1$ players, in which each
player minimizes its own cost:
\[
\begin{cases}
	p^{t+1}_i=\arg\min_{q\in E(\theta_i)}KL(p^t\|q)    & \text{for } p_i, \\
	p^{t+1}=\sum_i c_i p^t_i
	=\arg\min_{q\in\mathcal{P}}\sum_i c_i KL(p^t_i\|q) & \text{for } p
\end{cases}.
\]
For the stationary distribution $\pi$, the tuple
$(\pi,\{\pi_i\})$ constitutes a \emph{Nash equilibrium}
\citep{osborne1994course}, and convergence to this equilibrium is
guaranteed by ergodicity.

We extend the notion of KL-divergence to the divergence between a
distribution $p$ and a manifold $\mathcal{M}$:
\begin{align*}
	KL(p\|\mathcal{M}) & = \min_{q\in\mathcal{M}}KL(p\|q) \\
	& = KL(p\|p\Pi_m(\mathcal{M})).
\end{align*}
For $\mathcal{M}=E(\theta_i)$, this reduces to
\[
KL(p\|E(\theta_i))=\left\langle\log\frac{p(X_i\mid X_{-i})}{\theta_i(X_i\mid X_{-i})}\right\rangle_p.
\]

We now introduce a central quantity of this paper.
The \emph{full conditional divergence} is defined by
\begin{equation}\label{eq:FC}
	FC(p\|q) = \sum_i c_i\, KL(p\|E_i(q)),\quad
	E_i(q)=\{r\in\mathcal{P}\mid r(X_i\mid X_{-i})=q(X_i\mid X_{-i})\},
\end{equation}
which is the weighted average of KL-divergences from $p$ to the
full conditional manifolds $E_i(q)$
(see Figure~\ref{fig:fcd} for a geometric illustration).
If $p$ and $q$ satisfy
$\forall x\,(p(x)>0\Rightarrow q(x)>0)$, then
$FC(p\|q)<\infty$.

\begin{figure}[t]
	\centering\includegraphics{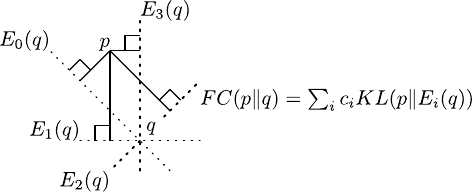}
	\caption{Geometric interpretation of
		$FC(p\|q)$.}\label{fig:fcd}
\end{figure}

The relationship between full conditional divergence and Besag's
\emph{pseudo-likelihood}
\citep{besag1975statistical,koller2009probabilistic} parallels the
relationship between KL-divergence and likelihood.
Rewriting Eq.~\eqref{eq:FC} gives
\begin{equation}\label{eq:FC_rewrite}
	FC(p\|q)=\left\langle\sum_i c_i\log p(X_i\mid X_{-i})\right\rangle_p
	-\underbrace{\left\langle\sum_i c_i\log q(X_i\mid X_{-i})\right\rangle_p}
	_{\text{pseudo-log-likelihood}},
\end{equation}
while KL-divergence takes the form
\begin{equation}\label{eq:KL}
	KL(p\|q)=\left\langle\log p(X)\right\rangle_p
	-\underbrace{\left\langle\log q(X)\right\rangle_p}_{\text{log-likelihood}}.
\end{equation}
In both equations, the first term cancels the second when $p=q$.
Pseudo-likelihood was introduced by \citet{besag1975statistical} as a
computationally convenient surrogate for likelihood that avoids the
expensive computation of the partition function.
Just as minimizing KL-divergence is equivalent to maximizing
likelihood, minimizing full conditional divergence is equivalent to
maximizing pseudo-likelihood.

Furthermore, both KL-divergence and full conditional divergence are
instances of Bregman divergence \citep{banerjee2005clustering}.
Let $\Lambda$ be a convex subset of $\mathbb{R}^k$.
For a strictly convex differentiable function
$f\colon\Lambda\to\mathbb{R}$, called the \emph{Bregman function}, the
Bregman divergence is defined as
\[
B_f(p\|q)=f(p)-f(q)-\nabla f(q)\cdot(p-q),
\]
where $\nabla f(q)$ denotes the gradient of $f$ at $q$ and $\cdot$
denotes the Euclidean inner product.
We treat a distribution $p\in\mathcal{P}$ as a vector in
$\mathbb{R}^{|X|}$ whose $x$-th component is $p(x)$.
KL-divergence is the Bregman divergence induced by the negative
entropy,
\[
f(p)=-H(p(X))=\left\langle\log p(X)\right\rangle_p,
\]
and full conditional divergence is the Bregman divergence induced by
the weighted sum of negative conditional entropies,
\[
f(p)=-\sum_i c_i H(p(X_i\mid X_{-i}))
=\sum_i c_i\left\langle\log p(X_i\mid X_{-i})\right\rangle_p.
\]

A notable feature of full conditional divergence is that it captures
the dependency structure among variables, whereas KL-divergence treats
all variables as a single concatenated variable and therefore does not
reflect any such structure.

The following inequality relates the two divergences (derivation in
Appendix):
\begin{equation}\label{eq:FC<=KL}
	FC(p\|q)\le KL(p\|q).
\end{equation}
An immediate consequence is that
$KL(p\|q)<\infty$ implies $FC(p\|q)<\infty$.

The next theorem, which is the main result of this section,
characterizes the location of the stationary distribution.

\begin{theorem}\label{thm:FC}
	Let $p\in\mathcal{P}$ be an arbitrary distribution and let $\pi$
	be the stationary distribution of pseudo-Gibbs sampling with
	random scan using the conditional distribution tables
	$\{\theta_i\}$.
	Then,
	\begin{equation}\label{eq:FC-limit}
		FC(p\|\pi)\le\sum_i c_i KL(p\|E(\theta_i)).
	\end{equation}
\end{theorem}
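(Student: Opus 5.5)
The plan is to combine the stationarity equation for $\pi$ with convexity of KL-divergence. Then split each KL term into a marginal part and a full conditional part, and cancel the marginal parts.

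\emph{Step 1 (stationarity as a mixture).} By Eq.~\eqref{eq:p^(t+1)} and Theorem~\ref{thm:m-projection}, the stationary distribution satisfies
\[
\pi=\sum_i c_i\,\pi_i,\qquad \pi_i:=\pi\,\Pi_m(E(\theta_i))=\pi(X_{-i})\,\theta_i(X_i\mid X_{-i}).
\]
Since KL-divergence is convex in its second argument, this gives
\[
KL(p\|\pi)\le\sum_i c_i\,KL(p\|\pi_i).
\]

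\emph{Step 2 (chain-rule decompositions).} For each $i$, the chain rule for KL-divergence gives two identities. First,
\[
KL(p\|\pi_i)=KL(p(X_{-i})\|\pi(X_{-i}))+\left\langle\log\frac{p(X_i\mid X_{-i})}{\theta_i(X_i\mid X_{-i})}\right\rangle_p ,
\]
whose second term is exactly $KL(p\|E(\theta_i))$. Second, the same decomposition of $KL(p\|\pi)$ yields
\[
KL(p\|\pi)=KL(p(X_{-i})\|\pi(X_{-i}))+KL(p\|E_i(\pi)).
\]
I will multiply the second identity by $c_i$ and sum over $i$, using $\sum_i c_i=1$. This gives
\[
KL(p\|\pi)=\sum_i c_i KL(p(X_{-i})\|\pi(X_{-i}))+FC(p\|\pi).
\]

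\emph{Step 3 (cancellation).} I will substitute both decompositions into the inequality of Step 1. The common term $\sum_i c_i KL(p(X_{-i})\|\pi(X_{-i}))$ then appears on both sides, and cancelling it yields Eq.~\eqref{eq:FC-limit}.

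\emph{Main obstacle: finiteness in the cancellation.} The only delicate point is that this cancellation is valid only when the common term is finite. I will handle it as follows.
\begin{itemize}
\item If the right-hand side of Eq.~\eqref{eq:FC-limit} is infinite, there is nothing to prove.
\item Otherwise, ergodicity implies that $\pi(x_{-i})>0$ for every $x_{-i}$ reachable by the chain. I would argue that, in the finite case, the marginal terms are finite whenever the $KL(p\|E(\theta_i))$ are finite.
\item As a fallback, I would avoid subtraction altogether. Apply the log-sum (convexity) inequality pointwise in $x$, namely $-\log\sum_i c_i\pi_i(x)\le-\sum_i c_i\log\pi_i(x)$. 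Then note that $\log\pi(x)-\log\pi(x_{-i})=\log\pi(x_i\mid x_{-i})$, and take expectations under $p$. This yields Eq.~\eqref{eq:FC-limit} directly, term by term, with the conventions $\log 0=-\infty$.
\end{itemize}
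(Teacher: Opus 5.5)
Your proof is correct and is essentially the paper's: the paper's proof is exactly your fallback, writing $\sum_i c_i KL(p\|E(\theta_i)) - FC(p\|\pi) = \bigl\langle -\sum_i c_i \log(\pi_i/\pi)\bigr\rangle_p$ via $\log\pi(x)-\log\pi_i(x)=\log\pi(x_i\mid x_{-i})-\log\theta_i(x_i\mid x_{-i})$, then applying Jensen for $-\log$ with $\sum_i c_i\pi_i=\pi$; your Steps 1--3 (convexity of KL in its second argument, chain rule, cancellation) are the same computation in integrated form. Your finiteness worry is also settled: irreducibility on the finite state space $X$ forces $\pi(x)>0$ for every $x$, so $KL(p(X_{-i})\|\pi(X_{-i}))$ is always finite and the cancellation in Step 3 is valid.
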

\begin{proof}
	See Appendix.
\end{proof}

This theorem justifies the hypothesis stated in
Section~\ref{sec:dependency_network}: if each conditional distribution
table closely approximates the true full conditional, the right-hand
side of Eq.~\eqref{eq:FC-limit} is small, which in turn confines
$\pi$ to a small neighborhood of $p$ in the full conditional
divergence sense.

We refer to the right-hand side of Eq.~\eqref{eq:FC-limit} as the
\emph{FC-limit} of $p$ and denote it by $FC_{\lim}(p)$.
To evaluate the tightness of this bound, we experimentally compare
$FC(p^D\|\pi)$ and $FC_{\lim}(p^D)$ on the following four datasets:

\begin{description}
	\item[Ising4x3S/L:] Data drawn from a $4\times3$ Ising model
	with binary variables.
	Ising4x3S contains 1{,}000 samples and Ising4x3L contains
	100{,}000 samples.
	\item[RB12-21S/L:] Data drawn from a Bayesian network with 12
	nodes and 21 edges, with binary variables.
	RB12-21S contains 1{,}000 samples and RB12-21L contains
	100{,}000 samples.
\end{description}

To compute $\pi$ exactly, we restrict the state space to
$|X|=2^{12}$.\footnote{The stationary distribution $\pi$ is computed
	as the solution of a system of $|X|$ linear equations.}
We construct a dependency network using the learning algorithm
described in Section~\ref{sec:learning} and train it on each dataset.

\begin{table}[t]
	\centering
	\caption{Comparison of $FC(p^D\|\pi)$ and
		$FC_{\lim}(p^D)$.}\label{table:FC-limit}
	\begin{tabular}{lcc}
		\hline
		Training data & $FC(p^D\|\pi)$     & $FC_{\lim}(p^D)$   \\\hline
		Ising4x3S     & $4.0\times10^{-3}$ & $5.3\times10^{-3}$ \\
		Ising4x3L     & $1.1\times10^{-3}$ & $1.1\times10^{-3}$ \\
		RB12-21S      & $1.5\times10^{-1}$ & $1.6\times10^{-1}$ \\
		RB12-21L      & $6.8\times10^{-3}$ & $7.4\times10^{-3}$ \\\hline
	\end{tabular}
	\newline
	\smallskip
	$p^D$: empirical distribution of training data\\
	$\pi$: learned model distribution\\
	Unit: nat ($= 1.44$ bit).
\end{table}

Table~\ref{table:FC-limit} reports the results.
The small gaps between $FC(p^D\|\pi)$ and $FC_{\lim}(p^D)$ confirm
that the FC-limit provides a tight upper bound in practice.

\section{Learning}\label{sec:learning}
As shown in the preceding section, although the model distribution
$\pi$ of a dependency network has no closed-form expression,
Theorem~\ref{thm:FC} guarantees that $\pi$ can be brought close to a
desired target distribution $p$ by reducing the FC-limit
$FC_{\lim}(p)$ in Eq.~\eqref{eq:FC-limit}.
Learning can therefore be performed by minimizing a cost function that
incorporates this bound.

Let $cost(Y_i,\theta_i)$ be such a cost function.
The learning task then decomposes into the following two-stage
minimization:
\[
\min_{Y_i,\theta_i}cost(Y_i,\theta_i)
=\underbrace{\min_{Y_i}\underbrace{\min_{\theta_i}cost(Y_i,\theta_i)}_{\text{parameter learning}}}_{\text{structure learning}}.
\]
Parameter learning fills in the conditional distribution table
$\theta_i(X_i\mid Y_i)$, while structure learning constructs the
information source function $Y_i(X)$.

\subsection{Cost Function}
Let $p^*$ denote the true underlying distribution.
By Theorem~\ref{thm:FC}, a natural choice of cost function is
\[
cost^* = \sum_i c_i\, cost^*_i, \qquad
cost^*_i = KL(p^*\|E(\theta_i)).
\]
We refer to $cost^*$ as the \emph{ideal cost}.
Because $p^*$ is unknown, the ideal cost cannot be computed directly.
We therefore replace $p^*$ by the empirical distribution $p^D$ and add
a non-negative complexity penalty $R(k_i,N)$ to guard against
overfitting, where $k_i=|Y_i|(|X_i|-1)$ denotes the number of free
parameters in the conditional distribution table.
This yields the cost function
\begin{equation}\label{eq:cost}
	cost = \sum_i c_i\, cost_i, \qquad
	cost_i = KL(p^D\|E(\theta_i)) + R(k_i,N).
\end{equation}
One concrete choice of $R(k_i,N)$ is the following
MDL-based penalty\footnote{The standard MDL criterion is based on the
	log-probability of the entire dataset and takes the form
	$(k\log N)/2$. Because $FC$ and $KL$ are defined as per-sample
	log-probabilities, we normalize accordingly.}
\citep{rissanen1989stochastic}:
\[
R(k_i,N) = \frac{k_i\log N}{2N},\quad k_i=|Y_i|(|X_i|-1).
\]
A key property of the cost function in Eq.~\eqref{eq:cost} is that it
decomposes into a sum of local cost functions $cost_i$, each depending
only on information local to node~$i$.
This decomposability allows the learning task to be solved as
independent local subproblems, one per node.

\subsection{Parameter Learning}
Parameter learning assigns a distribution over $X_i$ for each value
of $Y_i$.
More precisely, for a given information source $Y_i$, parameter
learning seeks the minimizer
\begin{equation}\label{eq:parameter_learning}
	\theta_i^{\min} = \arg\min_{\theta_i} KL(p^D\|E(\theta_i))
	+ R(k_i,N).
\end{equation}
Because $k_i$ and hence $R$ are constants once $Y_i$ is fixed, the
penalty can be omitted from the minimization, reducing
Eq.~\eqref{eq:parameter_learning} to
\[
\theta_i^{\min}=\arg\min_{\theta_i} KL(p^D\|E(\theta_i)).
\]
The following theorem characterizes the solution.

\begin{theorem}\label{thm:parameter_learning}
	Let $\theta_i^{\min}$ be the minimizer of
	$KL(p^D\|E(\theta_i))$ for a given information source $Y_i$.
	Then
	\begin{align}
		& \theta_i^{\min}(X_i\mid Y_i)
		=p^D(X_i\mid Y_i),\nonumber                 \\
		& KL(p^D\|E(\theta_i^{\min}))
		=H(p^D(X_i\mid Y_i))-H(p^D(X_i\mid X_{-i})).
		\label{eq:HtoKL}
	\end{align}
\end{theorem}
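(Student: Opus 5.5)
We use the closed form stated just before the definition of the full conditional divergence,
\[
KL(p\|E(\theta_i))=\left\langle\log\frac{p(X_i\mid X_{-i})}{\theta_i(X_i\mid X_{-i})}\right\rangle_p ,
\]
which is Theorem~\ref{thm:m-projection} substituted into the definition of $KL(p\|\mathcal M)$. With $p=p^D$ and $\theta_i(X_i\mid X_{-i})=\theta_i(X_i\mid Y_i(X_{-i}))$, we split the logarithm:
\[
KL(p^D\|E(\theta_i))=\left\langle\log p^D(X_i\mid X_{-i})\right\rangle_{p^D}-\left\langle\log\theta_i(X_i\mid Y_i)\right\rangle_{p^D}.
\]
The first term equals $-H(p^D(X_i\mid X_{-i}))$ and does not involve $\theta_i$. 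Minimizing over $\theta_i$ is therefore equivalent to maximizing the expected conditional log-likelihood $\langle\log\theta_i(X_i\mid Y_i)\rangle_{p^D}$.

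Since $Y_i$ is a deterministic function of $X_{-i}$, this expectation depends on $p^D$ only through the marginal $p^D(X_iY_i)$. Writing $p^D(X_iY_i)=p^D(Y_i)\,p^D(X_i\mid Y_i)$, we obtain
\[
\left\langle\log\theta_i(X_i\mid Y_i)\right\rangle_{p^D}=\sum_{y}p^D(y)\sum_{x_i}p^D(x_i\mid y)\log\theta_i(x_i\mid y).
\]
This separates into independent problems, one for each value $y$ with $p^D(y)>0$. For each such $y$, Gibbs' inequality (nonnegativity of $KL(p^D(\cdot\mid y)\|\theta_i(\cdot\mid y))$) shows that the inner sum is maximized exactly at $\theta_i(\cdot\mid y)=p^D(\cdot\mid y)$. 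Values of $y$ with $p^D(y)=0$ do not affect the cost, so $\theta_i^{\min}(\cdot\mid y)$ may be set arbitrarily there; we note this as the sense in which the minimizer is unique. This gives $\theta_i^{\min}(X_i\mid Y_i)=p^D(X_i\mid Y_i)$.

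Substituting back, the second term becomes $-\langle\log p^D(X_i\mid Y_i)\rangle_{p^D}=H(p^D(X_i\mid Y_i))$. Hence
\[
KL(p^D\|E(\theta_i^{\min}))=H(p^D(X_i\mid Y_i))-H(p^D(X_i\mid X_{-i})),
\]
which is Eq.~\eqref{eq:HtoKL}.

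The only delicate step is the conditioning bookkeeping. We must check that the expectation of a function of $(X_i,Y_i(X_{-i}))$ under $p^D(X)$ equals its expectation under the pushforward marginal $p^D(X_iY_i)$. We must also handle zero-probability cells, using the convention $0\log 0=0$ and restricting to the support of $p^D$ so that all quantities are finite at the minimizer. Everything else is routine.
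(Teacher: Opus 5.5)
Your proposal is correct and is essentially the paper's argument: the paper inserts $p^D(X_i\mid Y_i)$ into the log-ratio to write the cost as $\sum_{y}p^D(y)\,KL(p^D(X_i\mid y)\|\theta_i(X_i\mid y))$ plus the entropy difference, which is the same per-$y$ Gibbs-inequality step you apply after isolating the cross-entropy term. Your remark that $\theta_i^{\min}(\cdot\mid y)$ is arbitrary where $p^D(y)=0$ is a small refinement the paper omits.
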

\begin{proof}
	See Appendix.
\end{proof}

\subsection{Structure Learning}\label{sec:structure_learning}
Given $Y_i$, parameter learning determines $\theta_i^{\min}$, and
the local cost function of node $i$ becomes
\begin{align}\label{eq:scost}
	cost_i(Y_i)
	& = KL(p^D\|E(\theta_i^{\min})) + R(k_i,N)\nonumber         \\
	& = H(p^D(X_i\mid Y_i))-H(p^D(X_i\mid X_{-i})) + R(k_i,N).
\end{align}
Structure learning seeks the minimizer
$Y_i^{\min}=\arg\min_{Y_i} cost_i(Y_i)$.
Because the second term of Eq.~\eqref{eq:scost} is constant with
respect to $Y_i$, it can be dropped, yielding the reduced cost
\begin{equation}\label{eq:cost_prime}
	cost'_i(Y_i)=H(p^D(X_i\mid Y_i))+R(k_i,N).
\end{equation}
This reduced cost coincides with the one used by
\citet{quinlan1989inferring}.
Its first term matches the cost function of the ID3 algorithm
\citep{quinlan1986induction}: the information source $Y_i$ should
reduce the conditional entropy of $X_i$ while being penalized for
complexity through $R$.
Moreover, evaluating $cost'_i(Y_i)$ is substantially cheaper than
evaluating $cost_i(Y_i)$, because the constant term
$H(p^D(X_i\mid X_{-i}))$ need not be computed.

To balance accuracy with computational cost, we minimize $cost'_i$ by
a greedy algorithm.
In what follows, we omit the node index $i$ since the algorithm
operates independently on each node.

\begin{algorithm}[t]
	\caption{Greedy structure learning}\label{alg:framework}
	$Y$: information source function\\
	$Cand$: set of candidates\\
	$c$: cost value
	\begin{algorithmic}[1]
		\STATE $Y \gets$ identically-zero function
		\LOOP
		\STATE $c \gets cost(Y)$
		\STATE Construct the candidate set $Cand$
		\STATE $Y^{\min} \gets \arg\min_{Y' \in Cand}\, cost(Y')$
		\IF{$cost(Y^{\min})\ge c$}
		\RETURN $Y$
		\ENDIF
		\STATE $Y \gets Y^{\min}$
		\ENDLOOP
	\end{algorithmic}
\end{algorithm}

Algorithm~\ref{alg:framework} presents a general greedy framework for
structure learning.
In each iteration, a \emph{candidate} is a function obtained by
modifying the current $Y$.
Various modification strategies can be considered; a concrete
instantiation is described in Section~\ref{sec:creating_candidate}.

\subsection{Convergence to the True Distribution}%
\label{sec:convergence_to_true_distribution}
A central question is whether the model distribution $\pi$ converges
to the true underlying distribution $p^*$ as the amount of training
data increases.
The results in this subsection apply to any learning algorithm that
follows the framework of Algorithm~\ref{alg:framework} and are not
restricted to a specific instantiation.

To formalize the convergence, we parameterize the relevant quantities
by $N$ (the number of training samples):
\begin{description}
	\item[$\omega$:] an infinite sequence of training samples
	\item[$\omega(N)$:] the first $N$ samples of $\omega$
	\item[$p^D(N)$:] empirical distribution of $\omega(N)$
	\item[$Y_i(N)$:] information source learned from $p^D(N)$
	\item[$\theta_i(N)$:] conditional distribution table learned from
	$p^D(N)$ and $Y_i(N)$
	\item[$k_i(N)$:] number of free parameters of $\theta_i(N)$
	\item[$\pi(N)$:] model distribution learned from $\omega(N)$
	\item[$Cand(N)$:] candidate set from which $Y_i(N)$ is selected.
\end{description}
This notation is used only in this subsection and in the
corresponding proofs in the Appendix.

We call $Y_i(N)$ a \emph{lossless information source} when it is a
one-to-one mapping.
Let $Y_i^{ll}$ denote a lossless information source for $X_i$.
If $Y_i(N)=Y_i^{ll}$, then
\begin{align}
	& H(p^D(X_i\mid Y_i(N)))
	= H(p^D(X_i\mid X_{-i})),\nonumber                                                     \\
	& KL(p^D\|E(\theta_i(N)))
	= 0\quad\text{(by Eq.~\eqref{eq:HtoKL})},\nonumber                                     \\
	& cost_i(Y_i(N))
	= R(k_i(N),N)\quad\text{(by Eq.~\eqref{eq:scost})}.\label{eq:cost(Y_i^ll)}
\end{align}

\begin{theorem}\label{thm:to_0}
	Let $cost_i$ be as in Eq.~\eqref{eq:cost}, that is,
	$cost_i(Y_i)=KL(p^D\|E(\theta_i))+R(k_i,N)$.
	Suppose the following conditions hold:
	\begin{enumerate}
		\item $Y_i^{ll}\in Cand(N)$ for all sufficiently large $N$.
		\item $R(k_i(N),N)\ge0$.
		\item $\lim_{N\to\infty}R(k_i^{ll},N)=0$, where
		$k_i^{ll}$ is the number of free parameters of
		$Y_i^{ll}$.
	\end{enumerate}
	Then
	\[
	\lim_{N\to\infty}KL(p^D(N)\|E(\theta_i(N)))=0.
	\]
\end{theorem}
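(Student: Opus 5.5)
The plan is a sandwich argument. The learned cost will be bounded above by the cost of the lossless candidate and below by the KL term itself. The fact to use is that Algorithm~\ref{alg:framework} never returns a function whose cost exceeds that of every member of its final candidate set. Since $Y_i^{ll}\in Cand(N)$ for all large $N$, I would argue
\[
cost_i(Y_i(N))\le cost_i(Y_i^{ll}) = R(k_i^{ll},N),
\]
where the equality is Eq.~\eqref{eq:cost(Y_i^ll)}, obtained from Theorem~\ref{thm:parameter_learning}. The termination rule supports this: the returned $Y$ satisfies $cost(Y)\le cost(Y^{\min})\le cost(Y')$ for every $Y'$ in the final candidate set. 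I would read condition~1 as saying the lossless source is available in the candidate set from which $Y_i(N)$ is ultimately selected, so the minimality comparison applies to it.

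For the lower bound, condition~2 gives $R(k_i(N),N)\ge 0$. Together with the upper bound this yields
\[
0\le KL(p^D(N)\|E(\theta_i(N))) = cost_i(Y_i(N)) - R(k_i(N),N)\le R(k_i^{ll},N).
\]
Here $\theta_i(N)=\theta_i^{\min}$ for $Y_i(N)$, so the KL term is exactly the parameter-learned value in Eq.~\eqref{eq:scost}. Non-negativity holds because a KL divergence to a manifold is nonnegative. Letting $N\to\infty$ and applying condition~3 then gives the limit $0$ by the squeeze theorem.

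The main obstacle is justifying $cost_i(Y_i(N))\le cost_i(Y_i^{ll})$ from the greedy procedure. A greedy algorithm in general only compares against the candidates in its last iteration. I would handle this through the definition of $Cand(N)$ as the set from which $Y_i(N)$ is selected, taken to be the final candidate set, together with the stopping criterion above. If an intermediate iterate happened to equal $Y_i^{ll}$, its cost $R(k_i^{ll},N)$ bounds all later costs anyway, because costs are nonincreasing along the iterations. A secondary point is to confirm that $R(k_i^{ll},N)$ does not depend on which lossless encoding is chosen. This holds because $|Y_i^{ll}|$ is fixed, equal to $|X_{-i}|$ on its range.
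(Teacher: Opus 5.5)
Your proof is correct and is essentially the paper's argument: both use the chain $0\le KL(p^D(N)\|E(\theta_i(N)))\le cost_i(Y_i(N))\le cost_i(Y_i^{ll})=R(k_i^{ll},N)\to 0$, which relies on nonnegativity of the KL term and of $R$. Your additional justification of the comparison with $Y_i^{ll}$, via the stopping rule and the monotone decrease of cost along iterations, fills in a step the paper only asserts by writing $Y_i(N)=\arg\min_{Y_i\in Cand(N)}cost_i(Y_i)$.
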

\begin{proof}
	See Appendix.
\end{proof}

Theorem~\ref{thm:to_0} leads to the following convergence guarantee.

\begin{theorem}\label{thm:to_p_true}
	Let $cost_i$ be as in Eq.~\eqref{eq:cost}, that is,
	$cost_i(Y_i)=KL(p^D\|E(\theta_i))+R(k_i,N)$.
	Suppose the following conditions hold:
	\begin{enumerate}
		\item $Y_i^{ll}\in Cand(N)$ for all sufficiently large $N$.
		\item $R(k_i(N),N)\ge0$.
		\item $\lim_{N\to\infty}R(k_i^{ll},N)=0$, where
		$k_i^{ll}$ is the number of free parameters of
		$Y_i^{ll}$.
		\item The training data satisfy the strong law of large
		numbers, that is,
		\[
		\forall x\in X,\quad
		\lim_{N\to\infty}p^D(N)(x)=p^*(x)
		\quad\text{a.s.}
		\]
		\item $\forall x\in X, p^*(x)>0$
	\end{enumerate}
	Then
	\[
	\forall x\in X,\quad
	\lim_{N\to\infty}\pi(N)(x)=p^*(x)
	\quad\text{a.s.}
	\]
\end{theorem}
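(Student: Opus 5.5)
The plan is to chain Theorem~\ref{thm:to_0} with Theorem~\ref{thm:FC}, using $p=p^D(N)$ as the reference distribution, and then to upgrade the resulting convergence of full conditionals to convergence of the joint distribution. Fix an $\omega$ in the probability-one event of condition~4. Theorem~\ref{thm:FC} applied with $p=p^D(N)$ and tables $\theta_i(N)$ gives
\[
0\le FC(p^D(N)\|\pi(N))\le\sum_i c_i\,KL(p^D(N)\|E(\theta_i(N))).
\]
By Theorem~\ref{thm:to_0}, whose hypotheses are conditions~1--3, every summand on the right tends to $0$. Hence $FC(p^D(N)\|\pi(N))\to0$.

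Next I unpack $FC$ pointwise. Since $\pi(N)$ is an ergodic stationary distribution, I will assume (or argue from the finiteness of the bound, as in the remark after Eq.~\eqref{eq:FC}) that its conditionals $\pi(N)(X_i\mid x_{-i})$ are well defined wherever needed. We can write
\[
FC(p^D\|\pi)=\sum_i c_i\sum_{x_{-i}}p^D(x_{-i})\,KL\bigl(p^D(X_i\mid x_{-i})\,\big\|\,\pi(X_i\mid x_{-i})\bigr).
\]
Every term is nonnegative, so each weighted term tends to $0$. By condition~5 and condition~4, for $N$ large we have $p^D(N)(x_{-i})\ge\delta:=\tfrac12\min_x p^*(x)>0$ for all $i$ and $x_{-i}$. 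Hence each inner KL tends to $0$. Pinsker's inequality then gives $\pi(N)(x_i\mid x_{-i})-p^D(N)(x_i\mid x_{-i})\to0$. Because $p^D(N)(x_i\mid x_{-i})\to p^*(x_i\mid x_{-i})>0$, we conclude that $\pi(N)(x_i\mid x_{-i})\to p^*(x_i\mid x_{-i})$ for every $i$ and $x$.

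The main obstacle is passing from convergence of all full conditionals to convergence of the joint distribution, without assuming any a priori positivity of a limit of $\pi(N)$. I plan to use ratios along single-coordinate moves. Take $x$ and $x'$ differing only in coordinate $i$. Then
\[
\frac{\pi(N)(x')}{\pi(N)(x)}=\frac{\pi(N)(x'_i\mid x_{-i})}{\pi(N)(x_i\mid x_{-i})}\;\longrightarrow\;\frac{p^*(x'_i\mid x_{-i})}{p^*(x_i\mid x_{-i})}=\frac{p^*(x')}{p^*(x)},
\]
where the limit is finite and positive by condition~5. For this step I need $\pi(N)(x)>0$ eventually. This follows because $\pi(N)(x_i\mid x_{-i})$ converges to a positive limit, so these conditionals are eventually positive, and the ergodic chain then reaches every state, giving $\pi(N)>0$ on $X$.

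Any two states of the finite product space are connected by a path of single-coordinate changes. Multiplying the ratios along such a path shows that $\pi(N)(x)/\pi(N)(x^0)\to p^*(x)/p^*(x^0)$ for a fixed reference state $x^0$ and every $x$. Normalizing, we get
\[
\pi(N)(x)=\frac{\pi(N)(x)/\pi(N)(x^0)}{\sum_{x''}\pi(N)(x'')/\pi(N)(x^0)}\;\longrightarrow\;p^*(x).
\]
Since $\omega$ was an arbitrary point of a probability-one event, this convergence holds almost surely.
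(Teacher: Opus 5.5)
Your proof is correct and follows the paper's skeleton: fix $\omega$ in the law-of-large-numbers event, apply Theorem~\ref{thm:FC} with $p=p^D(N)$, use Theorem~\ref{thm:to_0} to send the FC-limit to zero, deduce convergence of the full conditionals of $\pi(N)$, and then of $\pi(N)$ itself. The difference lies in the last two steps, where your version is more careful than the paper's.

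The paper writes $FC(p^D(N)\|\pi(N))$ as $\sum_i c_i KL(p^D(N)\|\pi_i(N))$ with $\pi_i(N)=\pi(N)\Pi_m(E(\theta_i(N)))$. It then moves the limit inside the logarithm, tacitly assuming $\lim_N \pi_i(N)(x)$ exists. Finally it simply asserts that convergence of all full conditionals of $\pi(N)$ to those of $p^*$ forces $\pi(N)\to p^*$.

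Your route fills these gaps:
\begin{itemize}
\item Your $p^D(N)(x_{-i})$-weighted decomposition of $KL(p\|E_i(\pi))$ is the form that actually matches Eq.~\eqref{eq:FC}.
\item The uniform bound $p^D(N)(x_{-i})\ge\delta$ together with Pinsker gives convergence of the conditionals without presupposing any limit of $\pi(N)$.
\item Your ratio-chaining along single-coordinate paths, followed by renormalization (essentially Brook's lemma), actually proves the step the paper leaves implicit. It also shows exactly where condition~5 is needed.
\end{itemize}
The paper's version gains only brevity.

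Two small points to tidy up:
\begin{itemize}
\item You need $c_i>0$ to pass from the weighted sum to each inner KL.
\item Your justification of $\pi(N)>0$ is slightly circular. The chain moves according to $\theta_i(N)$, not according to the conditionals of $\pi(N)$, and those conditionals are only defined once $\pi(N)(x_{-i})>0$. It is cleaner to note that a finite irreducible chain has a strictly positive stationary distribution, which is the paper's standing ergodicity assumption. Alternatively, for large $N$ each $\theta_i(N)=p^D(N)(X_i\mid Y_i(N))$ is strictly positive because $p^D(N)>0$ on $X$, which makes the chain irreducible on $X$ directly.
\end{itemize}
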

\begin{proof}
	See Appendix.
\end{proof}

\subsection{Creating Candidates}\label{sec:creating_candidate}
We now describe a concrete algorithm for constructing the candidate
set $Cand$ in Algorithm~\ref{alg:framework}.
The scheme is based on the decision graph construction algorithm of
\citet{chickering2013bayesian}, adapted to our cost function.
While \citet{chickering2013bayesian} describe their algorithm in terms
of decision graph construction, we present it as modifying the
information source $Y_i$.

Since we focus on a single node $i$ throughout this subsection, we
omit the subscript $i$ when the context is clear.

Define $L_y=\{x\in X\mid Y(x)=y\}$.\footnote{Because
	$Y(X)=Y(X_{-i})$, the collection $\{L_y\}$ is a partition of $X$
	that ignores the value of $X_i$.}
Following the convention in decision trees, we refer to each $L_y$ as
a \emph{leaf}.
The leaves $\{L_y\}_{y\in Y}$ form a partition of $X$: they are
pairwise disjoint and satisfy $\bigcup_{y\in Y}L_y=X$.
Conversely, given a set of leaves, the function $Y$ can be recovered
by assigning a distinct index to each leaf.
Thus, a function $Y$ and its corresponding set of leaves are
equivalent representations of the same partition.

We define two operations that produce a new partition from the current
one (see Figure~\ref{fig:split_merge}):

\begin{figure}[t]
	\centering\includegraphics{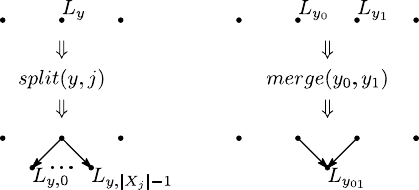}
	\caption{Split and merge operations on
		leaves.}\label{fig:split_merge}
\end{figure}

\begin{description}
	\item[$Split(y,j)$:] Split $L_y$ by the value of $x_j$
	($j\ne i$), creating new leaves
	\[L_{yx_j}=\{x\in L_y\mid X_j=x_j\},\]
	and remove $L_y$ from the leaf set.
	\item[$Merge(y_0,y_1)$:] Set
	$L_{y_{01}}=L_{y_0}\cup L_{y_1}$ and remove both
	$L_{y_0}$ and $L_{y_1}$ from the leaf set.
\end{description}

We define the cost of a leaf $L_y$ as
\begin{align*}
	Lcost(L_y)
	& = p^D(Y{=}y)\,H(p^D(X_i\mid Y_i{=}y))+R(k_{iy},N)\\
	& = -\frac{N_y}{N}\left(\sum_{x_i\in X_i}
	\frac{N_{yx_i}}{N_y}\log\frac{N_{yx_i}}{N_y}\right)
	+\frac{\log N}{2N}(|X_i|-1),
\end{align*}
where $N_y$ is the number of training samples in $L_y$ and
$N_{yx_i}$ is the number of training samples in $L_y$ with
$X_i=x_i$, and $k_{iy}=|X_i|-1$ is the number of free parameters in $L_y$.
The reduced cost $cost'$ then decomposes as a sum over leaves:
\begin{equation}\label{eq:sum_of_Lcost}
	cost'(Y)=\sum_{y\in Y}Lcost(L_y).
\end{equation}
The cost change $\Delta=cost'(Y^{\text{new}})-cost'(Y)$ induced by a
split or merge operation can be evaluated locally using
Eq.~\eqref{eq:sum_of_Lcost}:
\begin{equation}\label{eq:evaluating_Delta}
	\Delta=\begin{cases}
		\displaystyle\sum_{x_j\in X_j}
		Lcost(L_{yx_j})-Lcost(L_y)
		& \text{for }Split(y,j),     \\[6pt]
		Lcost(L_{y_{01}})-Lcost(L_{y_0})-Lcost(L_{y_1})
		& \text{for }Merge(y_0,y_1).
	\end{cases}
\end{equation}
Because evaluating Eq.~\eqref{eq:evaluating_Delta} requires only
information local to node $i$, the computational cost of learning is
significantly reduced.

Our algorithm extends decision tree algorithms such as ID3
\citep{quinlan1986induction}: whereas a standard decision tree uses
only split operations to construct a tree, our algorithm employs both
split and merge operations to construct a directed acyclic graph
(DAG).

When the DAG is updated by a split or merge operation, the new
function $Y^{\text{new}}$ is obtained by assigning indices to the
resulting leaves.
The assignment is arbitrary as long as distinct leaves receive
distinct indices; for concreteness, we adopt the following scheme.
\begin{description}
	\item[Case $Split(y,j)$:]
	\begin{equation}\label{eq:newYsplit}
		Y^{\text{new}}(x)=\begin{cases}
			Y(x)      & Y(x)<y,
			\\
			Y(x)-1    & Y(x)>y,
			\quad\text{(remove $L_y$ and shift indices left)}
			\\
			|Y|-1+x_j & Y(x)=y.
			\quad\text{(append split leaves to the end)}
		\end{cases}
	\end{equation}
	\item[Case $Merge(y_0,y_1)$, $y_0<y_1$:]
	\begin{equation}\label{eq:newYmerge}
		Y^{\text{new}}(x)=\begin{cases}
			Y(x)   & Y(x)<y_1,
			\\
			Y(x)-1 & Y(x)>y_1,
			\quad\text{(remove $L_{y_1}$ and shift indices left)}
			\\
			y_0    & Y(x)=y_1.
			\quad\text{(merge $L_{y_1}$ into $L_{y_0}$)}
		\end{cases}
	\end{equation}
\end{description}
This scheme ensures that $Y^{\text{new}}$ takes values in
$[0,|Y^{\text{new}}|)$, which is convenient for implementation.

During learning, the sequence of
operations---$Split(y,j)$ and $Merge(y_0,y_1)$---applied to produce
each $Y^{\text{new}}$ is recorded.
At inference time, given $x$, the value $Y(x)$ is initialized to $0$
and updated sequentially according to the recorded operations via
Eqs.~\eqref{eq:newYsplit} and~\eqref{eq:newYmerge}.
The complete table $\{Y(x)\}_{x\in X}$ need not be stored
explicitly, because each value $Y(x)$ can be reconstructed from the
recorded history.

\section{Inference by Conditional Pseudo-Gibbs Sampling}%
\label{sec:inference_by_pseudo-gibbs_sampling}
Suppose we wish to estimate $p^*(X_1X_2\mid X_0=x_0)$.
A straightforward approach is to construct the joint distribution
$\pi(X_0X_1X_2)$ via the learning algorithm and then compute
$\pi(X_1X_2\mid X_0=x_0)$ by Bayes' rule.
However, this approach becomes intractable when the number of
variables is large.

\citet{heckerman2000dependency} proposed the following alternative:
\begin{enumerate}
	\item Clamp $X_0$ to $x_0$.
	\item Run pseudo-Gibbs sampling without updating $X_0$.
	\item Treat the output sequence as samples drawn from
	$\pi'(X_1X_2\mid X_0=x_0)$, where $\pi'$ is a
	conditional distribution close to $\pi(X_1X_2\mid x_0)$.
\end{enumerate}

\begin{algorithm}[t]
	\caption{Conditional pseudo-Gibbs sampling}\label{algo:clamped}
	$X=\{X_i\}_{i=0}^{n-1}$: variables in the network\\
	$N^o$: number of samples to be drawn\\
	$\{X^t\}_{t=0}^{N^o-1}$: output sequence of the Markov chain\\
	$X_C$: variables clamped to $x_C$\\
	$X_{\bar{C}}$: unclamped variables
	\begin{algorithmic}[1]
		\STATE $X_C\gets x_C$
		\STATE $X_{\bar{C}}\gets x_{\bar{C}}$ \quad(arbitrary initial values)
		\FOR{$t = 0$ \TO $N^o-1$}
		\STATE $X^t\gets X$
		\STATE Select a node $i \in \bar{C}$ \label{line:select_not_clamped}
		\STATE Given inputs $Y_i(X_{-i})$, draw $x_i \sim \theta_i(X_i\mid Y_i)$
		\STATE $X_i\gets x_i$
		\ENDFOR
		\RETURN $\{X^t\}_{t=0}^{N^o-1}$
	\end{algorithmic}
\end{algorithm}

Algorithm~\ref{algo:clamped} presents the conditional pseudo-Gibbs
sampling procedure.
As in Algorithm~\ref{alg:pseudo}, the node $i$ at
line~\ref{line:select_not_clamped} can be selected by sequential scan
or by random scan with probability
\[
c_i=\begin{cases}
	\dfrac{1}{n-|C|} & i\in\bar{C} \\[4pt]
	0                & i\in C
\end{cases}
.
\]

Partitioning the index set $[0,n)$ into clamped nodes $C$ and
unclamped nodes $\bar{C}$, conditional pseudo-Gibbs sampling is
equivalent to pseudo-Gibbs sampling restricted to the nodes in
$\bar{C}$, where each node $i\in\bar{C}$ uses the conditional
distribution table
$\theta_i(X_i\mid X_{\bar{C}-\{i\}};\,X_C=x_C)$.
We can therefore decompose $KL(p\|E(\theta_i))$ as follows.
Let $B=\bar{C}-\{i\}$. Then
\begin{align}\label{eq:inference_KL}
	KL(p\|E(\theta_i))
	& = \left\langle\log\frac{p(X_i\mid X_{-i})}{\theta_i(X_i\mid X_{-i})}\right\rangle_p \nonumber            \\
	& = \left\langle\log\frac{p(X_i\mid X_B, X_C)}{\theta_i(X_i\mid X_B, X_C)}\right\rangle_{p(X_i, X_B, X_C)}
	\nonumber                                                                                                    \\
	& = \sum_{x_C}p(x_C)\left\langle\log\frac{p(X_i\mid X_B;\,X_C=x_C)}
	{\theta_i(X_i\mid X_B;\,X_C=x_C)}\right\rangle_{p(X_i,X_B\mid x_C)} \nonumber                               \\
	& = \sum_{x_C}p(x_C)\,KL(p(X_{\bar{C}}\mid x_C)\|E(\theta_i(X_i\mid X_B;\,x_C))).
\end{align}
Eq.~\eqref{eq:inference_KL} shows that $KL(p\|E(\theta_i))$ is a
weighted average of
$KL(p(X_{\bar{C}}\mid x_C)\|E(\theta_i(X_i\mid X_B;\,x_C)))$
over $x_C$.
Therefore, when $KL(p^*\|E(\theta_i))$ is small,
\[
KL(p^*(X_{\bar{C}}\mid x_C)\|E(\theta_i^{\min}(X_i\mid X_B;\,x_C)))
\]
is also small on average.
However, Eq.~\eqref{eq:inference_KL} also reveals that inference for
rare events---those with small $p^*(x_C)$---may become inaccurate,
because such terms receive little weight in the average.

\subsection{Lack of a Consistent Joint Distribution}
In a standard probabilistic model, training produces a single joint
distribution that is used for all subsequent inference queries.
By contrast, conditional pseudo-Gibbs sampling constructs a distinct
stationary distribution
$\pi'(X_{\bar{C}}\mid X_C=x_C)$ for each query $X_C=x_C$;
consequently, there is no single joint distribution $\pi(X)$ shared
across all inferences.

Inference by conditional pseudo-Gibbs sampling thus involves a
trade-off between computational tractability and the lack of a
consistent joint distribution.
This inconsistency is, however, immaterial for applications in which
only a single query is of interest.

\section{Related Work}\label{sec:related_work}
The idea of constructing a joint distribution from full conditional
distributions dates back to \citet{besag1974spatial}, who studied
Markov random fields and introduced the notion of pseudo-likelihood
in a subsequent paper \citep{besag1975statistical}.
\citet{geman1984stochastic} coined the term ``Gibbs sampling'' and
applied it to Markov random fields.
\citet{tresp1998nonlinear} proposed combining independently learned
conditional distributions via Gibbs sampling under the name
\emph{Markov blanket networks}.
\citet{heckerman2000dependency} formalized this line of work by
coining the term ``dependency network'' and introducing pseudo-Gibbs
sampling.
\citet{Lowd2011} empirically compared dependency networks with
Bayesian networks on real-world data.

On the theoretical side,
\citet{amari1985differential} introduced differential geometry into
statistics and laid the foundations of information geometry.
\citet{Takabatake12,takabatake2021reconsidering} developed an
information-geometric analysis of pseudo-Gibbs sampling, interpreting
each step as an m-projection onto a full conditional manifold, and
introduced the full conditional divergence; the present paper builds
on this work by deriving the FC-limit bound, reformulating structure
and parameter learning as cost-minimization problems, and proving the
convergence of the learned model distribution to the true underlying
distribution.

The cost function used in this paper coincides with the one proposed
by \citet{quinlan1989inferring}, who employed conditional entropy with
an MDL penalty for decision tree learning.

\section{Conclusion}\label{sec:conclusion}
A dependency network is a collection of regression mechanisms, each
estimating the conditional distribution of a single variable given all
others.
Pseudo-Gibbs sampling reconstructs a joint distribution from these
estimated full conditional distributions, and the learning task
decomposes into independent local subproblems for each node, keeping
the overall procedure tractable even as the number of nodes grows
large.

Pseudo-Gibbs sampling can be interpreted as iterative m-projections
onto the full conditional manifolds $\{E(\theta_i)\}$.
For an arbitrary distribution $p$ and the stationary distribution
$\pi$ of pseudo-Gibbs sampling, the inequality
\[
FC(p\|\pi)\le\sum_i c_i KL(p\|E(\theta_i))
\]
holds, where $FC$ denotes the full conditional divergence.
This inequality guarantees that if the right-hand side (the FC-limit)
is small, then $\pi$ is confined to a small neighborhood of $p$ in
the space of distributions.
Consequently, $\sum_i c_i KL(p\|E(\theta_i))$ serves as a principled
cost function for learning, and its minimization reduces to minimizing
conditional entropy.

We have proved that the model distribution converges to the true
underlying distribution as the number of training samples grows to
infinity, and have analyzed the accuracy of inference by conditional
pseudo-Gibbs sampling.

\section*{Acknowledgments}
This work was supported by JSPS KAKENHI (Grant No.\ 23K24909).

\appendix
\section*{Appendix}
\subsection*{Proof of Theorem \ref{thm:e-flat}}
Let $p_0,p_1$ be two distributions on the manifold $E(\theta_i)$, that is,
\begin{align*}
	p_0(X) & =p_0(X_{-i})\theta_i(X_i\mid X_{-i}) \\
	p_1(X) & =p_1(X_{-i})\theta_i(X_i\mid X_{-i})
\end{align*}
and let $p_\lambda$ be an arbitrary distribution on the e-geodesic $p_0p_1$ expressed by
\begin{align*}
	p_\lambda(X) & =Z^{-1}p_0(X)^{1-\lambda}p_1(X)^\lambda                                     \\
	& =Z^{-1}p_0(X_{-i})^{1-\lambda}p_1(X_{-i})^\lambda \theta_i(X_i\mid X_{-i}),
\end{align*}
where $Z$ is the normalizing constant.
Summing over $X_i$, we obtain
\[p_\lambda(X_{-i})=Z^{-1}p_0(X_{-i})^{1-\lambda}p_1(X_{-i})^\lambda\]
and
\[p_\lambda(X)=p_\lambda(X_{-i})\theta_i(X_i\mid X_{-i}).\]
Hence, $p_\lambda\in E(\theta_i)$.

Let $q_\lambda$ be an arbitrary distribution on the m-geodesic $p_0p_1$ expressed by
\begin{align*}
	q_\lambda(X)
	& =(1-\lambda)p_0(X)+\lambda p_1(X)                        \\
	& =\left((1-\lambda)p_0(X_{-i})+\lambda p_1(X_{-i})\right)
	\theta_i(X_i\mid X_{-i}).
\end{align*}
Summing over $X_i$, we obtain
\[q_\lambda(X_{-i})=(1-\lambda)p_0(X_{-i})+\lambda p_1(X_{-i})\]
and
\[q_\lambda(X)=q_\lambda(X_{-i})\theta_i(X_i\mid X_{-i}).\]
Hence, $q_\lambda\in E(\theta_i)$. \qed

\subsection*{Proof of Theorem~\ref{thm:m-projection}}
Let $q \in E(\theta_i)$. Then,
\begin{align*}
	KL(p\|q)
	 & = \left\langle\log\frac{p(X_{-i})}{q(X_{-i})}\right\rangle_{p(X_{-i})}
	+ \sum_{x_{-i}}p(x_{-i})\left\langle\log\frac{p(X_i\mid x_{-i})}{\theta_i(X_i\mid x_{-i})}
	\right\rangle_{p(X_i\mid x_{-i})},
\end{align*}
which is minimized when $q(X_{-i})\equiv p(X_{-i})$, that is, $q(X)=p(X_{-i})\,\theta_i(X_i\mid X_{-i})$.\qed

\subsection*{Derivation of Eq.~\eqref{eq:FC<=KL}}
\begin{align*}
	KL(p\|q)
	 & = \left\langle\log\frac{p(X)}{q(X)}\right\rangle_p                              \\
	 & = \left\langle\log\frac{p(X_i\mid X_{-i})}{q(X_i\mid X_{-i})}\right\rangle_p
	+ \left\langle\log\frac{p(X_{-i})}{q(X_{-i})}\right\rangle_p                       \\
	 & = \left\langle\log\frac{p(X_i\mid X_{-i})}{q(X_i\mid X_{-i})}\right\rangle_p
	+ \left\langle\log\frac{p(X_{-i})}{q(X_{-i})}\right\rangle_{p(X_{-i})}             \\
	 & = \left\langle\log\frac{p(X_i\mid X_{-i})}{q(X_i\mid X_{-i})}\right\rangle_p
	+ KL(p(X_{-i})\|q(X_{-i}))                                                         \\
	 & \ge \left\langle\log\frac{p(X_i\mid X_{-i})}{q(X_i\mid X_{-i})}\right\rangle_p.
\end{align*}
Multiplying both sides by $c_i$ and summing over $i$, we obtain
\begin{align*}
	KL(p\|q)
	 & \ge \sum_i c_i \left\langle\log\frac{p(X_i\mid X_{-i})}{q(X_i\mid X_{-i})}\right\rangle_p \\
	 & = FC(p\|q).
\end{align*}
\qed

\subsection*{Proof of Theorem \ref{thm:FC}}
Let $\pi_i$ denote the m-projection of $\pi$ onto $E(\theta_i)$, that is,
$\pi_i(X)=\pi(X_{-i})\theta_i(X_i\mid Y_i)$.
Then, the following inequality is obtained:
\begin{align*}
	\sum_i c_iKL(p\|E(\theta_i))-FC(p\|\pi)
	 & =\sum_i c_i\left\langle\log\frac{p(X_i\mid X_{-i})}{\theta_i(X_i\mid Y_i)}\right\rangle_p
	-\sum_i c_i\left\langle\log\frac{p(X_i\mid X_{-i})}{\pi(X_i\mid X_{-i})}\right\rangle_p                                       \\
	 & =\sum_i c_i\left\langle\log\frac{\pi(X_i\mid X_{-i})}{\theta_i(X_i\mid Y_i)}\right\rangle_p                                \\
	 & =\sum_i c_i\left\langle\log\frac{\pi(X_{-i})\pi(X_i\mid X_{-i})}{\pi(X_{-i})\theta_i(X_i\mid Y_i)}\right\rangle_p          \\
	 & =\sum_i c_i\left\langle\log\frac{\pi(X)}{\pi_i(X)}\right\rangle_p                                                          \\
	 & =\left\langle-\sum_i c_i\log\frac{\pi_i(X)}{\pi(X)}\right\rangle_p                                                         \\
	 & \ge\left\langle-\log\sum_i c_i\frac{\pi_i(X)}{\pi(X)}\right\rangle_p\quad
	\text{(since $-\log$ is convex)}                                                                                              \\
	 & =\left\langle-\log\frac{\pi(X)}{\pi(X)}\right\rangle_p\quad\text{(since $\sum_i c_i \pi_i=\pi$ by Eq.~\eqref{eq:p^(t+1)})} \\
	 & =0.
\end{align*}
\qed

\subsection*{Proof of Theorem~\ref{thm:parameter_learning}}
\begin{align*}
	KL(p\|E(\theta_i(X_i\mid Y_i)))
	 & = KL(p\|p\Pi_m(E(\theta_i)))                                                     \\
	 & = \left\langle\log\frac{p(X_i\mid X_{-i})}{\theta_i(X_i\mid Y_i)}\right\rangle_p \\
	 & = \left\langle\log\frac{p(X_i\mid Y_i)}{\theta_i(X_i\mid Y_i)}\right\rangle_p
	+ \left\langle\log\frac{p(X_i\mid X_{-i})}{p(X_i\mid Y_i)}\right\rangle_p           \\
	 & = \left\langle\log\frac{p(X_i\mid Y_i)}{\theta_i(X_i\mid Y_i)}\right\rangle_p
	+ H(p(X_i\mid Y_i)) - H(p(X_i\mid X_{-i}))                                          \\
	 & = \sum_{y_i}p(y_i)\,KL(p(X_i\mid y_i)\|\theta_i(X_i\mid y_i))
	+ H(p(X_i\mid Y_i)) - H(p(X_i\mid X_{-i})).
\end{align*}
Since the first term is non-negative and vanishes when $\theta_i(X_i\mid Y_i)\equiv p(X_i\mid Y_i)$, the minimum is attained at $\theta_i^{\min}=p(X_i\mid Y_i)$, giving
\[
	KL(p\|E(\theta_i^{\min})) = H(p(X_i\mid Y_i)) - H(p(X_i\mid X_{-i})).
\]
Substituting $p=p^D$ we complete the proof.
\qed

\subsection*{Proof of Theorem~\ref{thm:to_0}}
Assume that $N$ is sufficiently large.
Since $Y_i(N)=\arg\min_{Y_i\in Cand(N)} cost_i(Y_i)$,
and $Y_i^{ll}\in Cand(N)$,
\[0\le cost_i(Y_i(N))\le cost_i(Y_i^{ll}).\]
By Eq.~\eqref{eq:cost(Y_i^ll)}, $cost_i(Y_i^{ll})=R(k_i^{ll},N)$ and it converges to $0$ as $N\to\infty$.
Therefore,
\[\lim_{N\to\infty}cost_i(Y_i(N))=0,\] that is,
\[
	\lim_{N\to\infty}(KL(p^D(N)\|E(\theta_i(N)))+R(k_i(N),N))=0.
\]
Since $KL(p^D(N)\|E(\theta_i(N)))\ge0, R(k_i(N),N)\ge 0$, we obtain:
\[\lim_{N\to\infty}KL(p^D(N)\|E(\theta_i(N)))=0,\quad
	\lim_{N\to\infty}R(k_i(N),N)=0.\]
\qed

\subsection*{Proof of Theorem~\ref{thm:to_p_true}}
Let $\Omega$ be the set of sequences for which $p^D(N)$ converges to $p^*$, that is,
\[
\Omega = \{\omega\mid\forall x\in X, \lim_{N\to\infty}p^D(N)(x)=p^*(x)\},
\]
Here, assume $\omega\in\Omega$.
In the case $p^*(x)>0$, $p^D(N)(x)>0$ for all sufficiently large $N$.
In this condition, convergence in FC-divergence implies pointwise convergence.

By Theorem \ref{thm:FC},
\begin{equation}\label{eq:FC(N)<=KL(N)}
	FC(p^D(N)\|\pi(N))\le\sum_i c_i KL(p^D(N)\|E(\theta_i(N))).
\end{equation}
By Theorem~\ref{thm:to_0}, the right-hand side of Eq. \eqref{eq:FC(N)<=KL(N)} converges to $0$ as $N\to\infty$; therefore,
\[\lim_{N\to\infty}FC(p^D(N)\|\pi(N))=0.\]
Since 
$FC(p^D(N)\|\pi(N))=\sum_i c_i KL(p^D(N)\|\pi_i(N))$ and $c_i>0$,
\begin{align*}
	\lim_{N\to\infty}KL(p^D(N)\|\pi_i(N))
	&=\lim_{N\to\infty}\sum_x
	p^D(N)(x)\log\frac{p^D(N)(x)}{\pi_i(N)(x)},\quad
	\pi_i(N)=\pi(N)\Pi_m(E(\theta_i(N)))\\
	&=\sum_x p^*(x)\lim_{N\to\infty}\log\frac{p^*(x)}{\pi_i(N)(x)}\\
	&=\sum_x
	p^*(x)\lim_{N\to\infty}\log\frac{p^*(x_i\mid x_{-i})}
	{\pi_i(N)(x_i\mid x_{-i})}\quad\text{by Theorem \ref{thm:m-projection}}\\
	&=0
\end{align*}
for all $i$.
Since $p^*(x)>0$, $\pi_i(N)(x_i\mid x_{-i})$ converges to 
$p^*(x_i\mid x_{-i})$ as $N\to\infty$ for all $i$. 
Since every full conditional distribution of $\pi(N)$ 
coincides with that of $p^*$ in the limit,
\[\lim_{N\to\infty}\pi(N)(x)=p^*(x).\]
 
Since the strong law of large numbers gives $\Pr\{\omega\in\Omega\}=1$, we conclude
\[
\lim_{N\to\infty}\pi(N)(x) = p^*(x) \quad \text{a.s.}
\]
\qed

\vskip 0.2in
\bibliography{myref}
\end{document}